\pgfplotsset{compat=1.10}
\definecolor{tblblue}{RGB}{101,124,191}
\definecolor{tblred}{rgb}{1,0.93,0.93}
\definecolor{DarkBlue}{rgb}{0,0,0.7} 
\definecolor{BrickRed}{RGB}{203,65,84}
\newtheorem{lemma}{Lemma}
\newtheorem{proposition}{Proposition}
\newcommand\cn{\mathrm{cn}}
\title{
Deep Decoder: Concise Image Representations from Untrained Non-convolutional Networks
}
\author{Reinhard Heckel and Paul Hand}
\newcommand\relu{\mathrm{relu}}
\newcommand{\PS}{P_S}
\newcommand{\PSc}{P_{S^c}}
\renewcommand\diag{\mathrm{diag}}
\begin{document}

\begin{center}

{\bf{\LARGE{
Deep Decoder: Concise Image Representations from Untrained Non-convolutional Networks
}}}

\vspace*{.2in}

{\large{
\begin{tabular}{cccc}
Reinhard Heckel$^{\ast}$ and Paul Hand$^{\dagger}$\\
\end{tabular}
}}

\vspace*{.05in}

\begin{tabular}{c}
$^\ast$Dept. of Electrical and Computer Engineering, Rice University \\
$^\dagger$Dept. of Mathematics and College of Computer and Information Science, Northeastern University
\end{tabular}

\vspace*{.1in}

\today

\vspace*{.1in}

\end{center}


\begin{abstract}
Deep neural networks, in particular convolutional neural networks, have become highly effective tools for compressing images and solving inverse problems including denoising, inpainting, and reconstruction from few and noisy measurements. This success can be attributed in part to their ability to represent and generate natural images well. Contrary to classical tools such as wavelets, image-generating deep neural networks have a large number of parameters---typically a multiple of their output dimension---and need to be trained on large datasets. 
In this paper, we propose an untrained simple image model, called the deep decoder, which is a deep neural network that can generate natural images from very few weight parameters.
The deep decoder has a simple architecture with no convolutions and fewer weight parameters than the output dimensionality. This underparameterization enables the deep decoder to compress images into a concise set of network weights, which we show is on par with wavelet-based thresholding. Further, underparameterization provides a barrier to overfitting, allowing the deep decoder to have state-of-the-art performance for denoising. The deep decoder is simple in the sense that each layer has an identical structure that consists of only one upsampling unit, pixel-wise linear combination of channels, ReLU activation, and channelwise normalization. This simplicity makes the network amenable to theoretical analysis, and it sheds light on the aspects of neural networks that enable them to form effective signal representations.
\end{abstract}

\section{Introduction}
Data models are central for signal and image processing and play a key role in compression and inverse problems such as denoising, super-resolution, and compressive sensing. 
These data models impose structural assumptions on the signal or image, which are traditionally based on expert knowledge. 
For example, imposing the assumption that an image can be represented with few non-zero wavelet coefficients enables modern (lossy) image compression~\citep{antonini_image_1992} and efficient denoising~\citep{donoho_de-noising_1995}. 

In recent years, it has been demonstrated that for a wide range of imaging problems, from compression to denoising, deep neural networks trained on large datasets can often outperform methods based on traditional image models~\citep{toderici_variable_2015,agustsson_soft_2017,theis_lossy_2017,burger_image_2012,zhang_beyond_2017}. 
This success can largely be attributed to the ability of deep networks to represent realistic images when trained on large datasets.  Examples include learned representations via autoencoders~\citep{hinton_reducing_2006} and generative adversarial models~\citep{goodfellow_generative_2014}. 
Almost exclusively, three common features of the recent success stories of using deep neural network for imaging related tasks are 
i) that the corresponding networks are over-parameterized (i.e., they have much more parameters than the dimension of the image that they represent or generate),
ii) that the networks have a convolutional structure, and perhaps most importantly, iii) that the networks are trained on large datasets. 

An important exception that breaks with the latter feature is a recent work by Ulyanov et al.~\cite{ulyanov_deep_2017}, which provides an algorithm, called the deep image prior (DIP), based on deep neural networks, that can solve inverse problems well without any training.
Specifically, Ulyanov et al.~demonstrated that fitting the weights of an over-parameterized deep convolutional network to a single image, together with strong regularization by early stopping of the optimization, performs competitively on a variety of image restoration problems. 
This result is surprising because it does not involve a training dataset, which means that the notion of what makes an image `natural' is contained in a combination of the network structure and the regularization. 
However, without regularization the proposed network has sufficient capacity to overfit to noise, preventing meaningful image denoising.  

These prior works demonstrating the effectiveness of deep neural networks for image generation beg the question whether there may be a deep neural network model of natural images that is underparameterized and whose architecture alone, without algorithmic assistance, forms an efficient model for natural images.

In this paper, we propose a simple image model in the form of a deep neural network that can represent natural images well while using very few parameters. 
This model thus enables image compression, denoising, and solving a variety of inverse problems with close to or state of the art performance. 
We call the network the deep decoder, due to its resemblance to the decoder part of an autoencoder. 
The network does not require training, and contrary to previous approaches, the network itself incorporates all assumptions on the data, is under-parameterized, does not involve convolutions, and has a simplicity that makes it amenable to theoretical analysis. 
The key contributions of this paper are as follows:

\begin{itemize}
\item The network is under-parameterized. 
Thus, the network maps a lower-dimensional space to a higher-dimensional space, similar to classical image representations such as sparse wavelet representations. 
This feature enables image compression by storing the coefficients of the network after its weights are optimized to fit a single image. In Section~\ref{sec:comp}, we demonstrate that the compression is on-par with wavelet thresholding~\citep{antonini_image_1992}, a strong baseline that underlies JPEG-2000.  An additional benefit of underparameterization is that it provides a barrier to overfitting, which enables regularization of inverse problems.
\item The network \textit{itself} acts as a natural data model.  Not only does the network require no training (just as the DIP~\cite{ulyanov_deep_2017}); it also does not critically rely on regularization, for example by early stopping (in contrast to the DIP).
The property of not involving learning has at least two benefits: The same network and code is usable for a number of applications, and the method is not sensitive to a potential misfit of training and test data.  
\item The network does not use convolutions. 
Instead, the network does have pixelwise linear combinations of channels, and, just like in a convolutional neural network, the weights are shared among spatial positions.
Nonetheless, these are not convolutions because they provide no spatial coupling between pixels, despite how pixelwise linear combinations are sometimes called `1x1 convolutions.' In contrast, the majority of the networks for image compression, restoration, and recovery have convolutional layers with filters of nontrivial spatial extent~\cite{toderici_variable_2015,agustsson_soft_2017,theis_lossy_2017,burger_image_2012,zhang_beyond_2017}. 
This work shows that relationships characteristic of nearby pixels of natural images can be imposed directly by upsampling layers.
\item The network only consists of a simple combination of few building blocks, which makes it amenable to analysis and theory. For example, we prove that the deep decoder can only fit a small proportion of noise, which, combined with the empirical observation that it can represent natural images  well, explains its denoising performance. 
\end{itemize}

The remainder of  the paper is organized as follows.
In Section~\ref{sec:comp}, we first demonstrate that the deep decoder enables concise image representations.  We formally introduce the deep decoder in Section~\ref{sec:deepdecoder}. 
In Section~\ref{sec:invproblems}, we show the performance of the deep decoder on a number of inverse problems such as denoising. 
In Section~\ref{sec:literature} we discuss related work, and finally, in Section~\ref{sec:theory} we provide theory and explanations on what makes the deep decoder work.


\newcommand\prior{G}
\newcommand\param{\mC}
\newcommand\obs{\vy}
\newcommand\img{\vx}
\newcommand\noise{\eta}
\newcommand\loss{L}
\newcommand\Nparam{N}
\newcommand\nout{n}


\section{Concise image representations with a deep image model
\label{sec:comp}
}

Intuitively, a model describes a class of signals well if it is able to represent or approximate a member of the class with few parameters. In this section, we demonstrate that the deep decoder, an untrained, non-convolutional neural network, defined in the next section, enables \emph{concise} representation of an image---on par with state of the art wavelet thresholding. 

The deep decoder is a deep image model $\prior \colon \reals^\Nparam \to \reals^\nout$, where $\Nparam$ is the number of parameters of the model, and $\nout$ is the output dimension, which is (much) larger than the number of parameters ($\nout \gg \Nparam$). 
The parameters of the model, which we denote by $\param$, are the weights of the network, and not the input of the network, which we will keep fixed. 
To demonstrate that the deep decoder enables concise image representations, we choose the number of parameters of the deep decoder, $\Nparam$, such that it is a small fraction of the output dimension of the deep decoder, i.e., the dimension of the images\footnote{Specifically, using notation defined in Section~\ref{sec:deepdecoder}, we took a deep decoder $\prior$ with $d=6$ layers and output dimension $512\times 512 \times 3$, and choose $k=64$ and $k=128$ for the large and small compression factors, respectively.
}.


We draw 100 images from the ImageNet validation set uniformly at random and crop the center to obtain a 512x512 pixel color image. 
For each image $\vx^\ast$, we fit a deep decoder model $\prior(\param)$ by minimizing the loss
\[
\loss(\param) = \norm[2]{\prior(\param) - \img^\ast }^2
\]
with respect to the network parameters $\param$ using the Adam optimizer. 
We then compute for each image the corresponding peak-signal-to-noise ratio, defined as $10 \log_{10}( 1 / \mathrm{MSE} )$, where $\mathrm{MSE} = \frac{1}{3 \cdot 512^2} \norm[2]{\img^\ast - \prior(\param)}^2$, $\prior(\param)$ is the image generated by the network, and $\img^\ast$ is the original image. 

We compare the compression performance to wavelet compression~\citep{antonini_image_1992} by representing each image with the $\Nparam$-largest wavelet coefficients. 
Wavelets---which underly JPEG 2000, a standard for image compression---are one of the best methods to approximate images with few coefficients.
In Fig.~\ref{fig:deepdeccompression} we depict the results. 
It can be seen that for large compression factors ($3 \cdot512^2 / N = 32.3 
$), the representation by the deep decoder is slightly better for most images (i.e., is above the red line), while for smalle compression factors ($3 \cdot512^2 /\Nparam = 8 
$), the wavelet representation is slightly better.
This experiment shows that deep neural networks can represent natural images well with very few parameters and without any learning.

The observation that, for small compression factors, wavelets enable more concise representations than the deep decoder is intuitive because  any image can be represented exactly with sufficiently many wavelet coefficients.  In contrast, there is no reason to believe  a priori that the deep decoder has zero representation error because it is underparameterized.  

The main point of this experiment is to demonstrate that the deep decoder is a good image model, which enables applications like solving inverse problems, as in  Section~\ref{sec:invproblems}.
However, it also suggest that the deep decoder can be used for lossy image compression, by quantizing the coefficients $\mC$ and saving the quantized coefficients.
In the appendix, we show that image representations of the deep decoder are not sensitive to perturbations of its coefficients, thus quantization does not have a detrimental effect on the image quality. 
Deep networks were used successfully before for the compression of images~\citep{toderici_variable_2015,agustsson_soft_2017,theis_lossy_2017}. In contrast to our work, which is capable of compressing images without any learning, the aforementioned works \emph{learn} an encoder and decoder using convolutional recurrent neural networks~\citep{toderici_variable_2015} and convolutional autoencoders~\citep{theis_lossy_2017} based on training data.

\begin{figure}
\begin{center}
\includegraphics{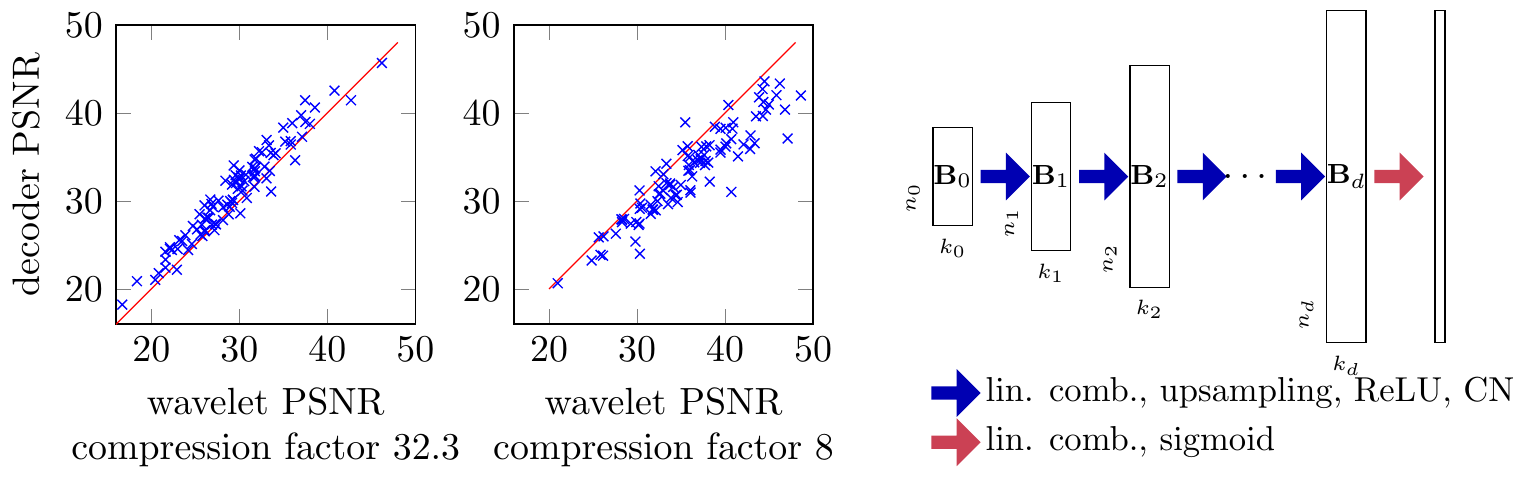}

\end{center}
\vspace{-0.5cm}
\caption{
\label{fig:deepdeccompression}
\label{fig:decnw}
The deep decoder (depicted on the right) enables concise image representations, on-par with state-of-the-art wavelet based compression. 
The crosses on the left depict the PSNRs for 100 randomly chosen ImageNet-images represented with few wavelet coefficients and with a deep decoder with an equal number of parameters. A cross above the red line means the corresponding image has a smaller representation error when represented with the deep decoder. The deep decoder is particularly simple, as each layer has the same structure, consisting of a pixel-wise linear combination of channels, upsampling, ReLU nonlinearities, and channelwise normalization (CN). 
}
\end{figure}


\section{\label{sec:deepdecoder}The deep decoder}

\newcommand\kout{k_{\mathrm{out}}}

We consider a decoder architecture that transforms a randomly chosen and fixed tensor ${\mB_0} \in \reals^{{n_0 \times k_0}}$ 
consisting of ${k_0}$ many ${n_0}$-dimensional channels to an $n_d \times \kout$ dimensional image, where $\kout=1$ for a grayscale image, and $\kout=3$ for an RGB image with three color channels. 
Throughout, $n_i$ has two dimensions; for example our default configuration has ${n_0} = 16\times 16$ and $n_d = 512\times512$. 
The network transforms the tensor ${\mB_0}$ to an image by pixel-wise linearly combining the channels, upsampling operations, applying rectified linear units (ReLUs), and normalizing the channels. 
Specifically, the channels in the $(i+1)$-th layer are given by
\[
\mB_{i+1} =  \cn(\relu( \mU_{i} \mB_i \mC_i ) ), \quad i = 0,\ldots, d-1.
\]
Here, the coefficient matrices $\mC_i \in \reals^{k_i \times k_{i+1}}$ contain the weights of the network. 
Each column of the tensor $\mB_i \mC_i \in \reals^{n_i \times k_{i+1}}$ is formed by taking linear combinations of the channels of the tensor $\mB_i$ in a way that is consistent across all pixels. 

Then, $\cn(\cdot)$ performs a channel normalization operation 
which is equivalent to normalizing each channel individually, and can be viewed as a special case of the popular batch normalization proposed in~\citep{ioffe_batch_2009}.
Specifically, let $\mZ_i = \relu( \mU_{i} \mB_i \mC_i )$ be the channels in the $i$-th layer, and let $\vz_{ij}$ be the $j$-th channel in the $i$-th layer.
Then channel normalization performs the following transformation:
\newcommand\empmean{\mathrm{mean}}
\newcommand\empvar{\mathrm{var}}
$
\vz'_{ij} = \frac{ \vz_{ij} - \empmean(\vz_{ij})  }{ \sqrt{ \empvar(\vz_{ij}) +\epsilon}} \gamma_{ij} + \beta_{ij}
$,
where $\empmean$ and $\empvar$ compute the empirical mean and variance, and $\gamma_{ij}$ and $\beta_{ij}$ are parameters, learned independently for each channel, and $\epsilon$ is a fixed small constant.
Learning the parameter $\gamma$ and $\beta$ helps the optimization but is not critical. 
This is a special case of batch normalization with batch size one proposed in~\citep{ioffe_batch_2009}, and significantly improves the fitting of the model, just like how batch norm alleviates problems encountered when training deep neural networks. 

The operator $\mU_i \in \reals^{n_{i+1} \times n_i}$ is an upsampling tensor, which we choose throughout so that it performs bi-linear upsampling. 
For example, if the channels in the {input} have dimensions ${n_0} = 16\times 16$, then the upsampling operator {$\mU_0$} upsamples each channel to dimensions $32 \times 32$. In the last layer, we do not upsample, which is to say that we choose the corresponding upsampling operator as the identity.
Finally, the output of the $d$-layer network is formed as
\[
\img = \mathrm{sigmoid}{(\mB_d \mC_{d})},
\]
where ${\mC_{d}} \in \reals^{k_d \times \kout}$. See Fig.~\ref{fig:decnw} for an illustration.
Throughout, our default architecture is a $d=6$ layer network with $k_i = k$ for all $i$, and we focus on output images of dimensions $n_d = 512\times 512$ and number of channels $\kout=3$. 
Recall that the parameters of the network are given by ${\param = \{\mC_0,\mC_1,\ldots, \mC_{d}\}}$, and the output of the network is only a function of $\param$, since we choose the tensor {$\mB_0$} at random and fix it. Therefore, we write $\img = \prior(\param)$.
Note that the number of parameters is given by $N = \sum_{i=1}^d (k_i k_{i+1} + 2k_i) + {\kout} k_d$ 
where the term $2k_i$ corresponds to the two free parameters associated with the channel normalization. 
Thus, the number of parameters is $N = d k^2 + 2dk + 3 k$.
In the default architectures with $d=6$ and $k=64$ or $k=128$, we have that $N =$ 25,536 (for $k=64$) and $N =$100,224 ($k=128$)
out of an RGB image space of dimensionality $512\times512\times3=$ 786,432 parameters.

We finally note that naturally variations of the deep decoder are possible; for example in a previous version of this manuscript, we applied upsampling after applying the relu-nonlinearity, but found that applying it before yields slightly better results.

\subsection{A non-convolutional network?}

While the deep decoder does not use convolutions, its structure is closely related to that of a convolutional neural network. 
Specifically, the network does have pixelwise linear combinations of channels, and just like in a convolutional neural network, the weights are shared among spatial positions.
Nonetheless, pixelwise linear combinations are not proper convolutions because they provide no spatial coupling of pixels, despite how they are sometimes called $1\times 1$ convolutions. In the deep decoder, the source of spatial coupling is only from upsampling operations.

In contrast, a large number of networks for image compression, restoration, and recovery have convolutional layers with filters of nontrivial spatial extent~\cite{toderici_variable_2015,agustsson_soft_2017,theis_lossy_2017,burger_image_2012,zhang_beyond_2017}. 
Thus, it is natural to ask whether using linear combinations as we do, instead of actual convolutions yields better results.

Our simulations indicate that, indeed, linear combinations yield more concise representations of natural images than $p\times p$ convolutions, albeit not by a huge factor.  Recall that the number of parameters of the deep decoder with $d$ layers,  $k$ channels at each layer, and $1\times1$ convolutions is
$
N(d,k;1) = d k^2 + 3k + 2 dk.
$
If we consider a deep decoder with convolutional layers with filters of size $p \times p$, then the number of parameters is:
$
N(d,k;p) = p^2 (d k^2 + 3k) + 2dk.
$
If we fix the number of channels, $k$, but increase $p$ to 3, the representation error only decreases since we increase the number of parameters (by a factor of approximately $3^2$).
We consider image reconstruction as described in Section~\ref{sec:comp}.  For a meaningful comparison, we keep the number of parameters fixed, and compare the representation error of a deep decoder with $p=1$ and $k=64$ (the default architecture in our paper) to a variant of the deep decoder with $p=3$ and $k= 22$, so that the number of parameters is essentially the same in both configurations. 
We find that the representation of the deep decoder with $p=1$ is better (by about 1dB, depending on the image), and thus for concise image representations, linear combinations ($1\times 1$ convolutions) appear to be more effective than convolutions of larger spatial extent. 


\section{\label{sec:invproblems}
Solving inverse problems with the deep decoder}

In this section, we use the deep decoder as a structure-enforcing model or regularizers for solving standard inverse problems: denoising, super-resolution, and inpainting.
In all of those inverse problems, the goal is to recover an image $\img$ from a noisy observation $\obs = f(\img) + \noise$. 
Here, $f$ is a known forward operator (possibly equal to identity), and $\noise$ is structured or unstructured noise. 

We recover the image $\img$ with the deep decoder as follows. 
Motivated by the finding from the previous section that a natural image $\img$ can (approximately) be represented with the deep decoder as $\prior(\param)$, 
we estimate the unknown image from the noisy observation $\vy$ by minimizing the loss
\[
\loss(\param) = \norm[2]{ f(\prior(\param)) - \obs }^2
\]
with respect to the model parameters $\param$.  
Let $\hat \param$ be the result of the optimization procedure.
We estimate the image as $\hat \img = \prior(\hat \param)$. 

We use the Adam optimizer for minimizing the loss, but have obtained comparable results with gradient descent. 
Note that this optimization problem is non-convex and we might not reach a global minimum.
Throughout, we consider the least-squares loss (i.e., we take $\norm[2]{\cdot}$ to be the $\ell_2$ norm), but the loss function can be adapted to account for structure of the noise.

We remark that fitting an image model to observations in order to solve an inverse problem is a standard approach and is not specific to the deep decoder or deep-network-based models in general. 
Specifically, a number of classical signal recovery approaches fit into this framework; for example solving a compressive sensing problem with $\ell_1$-norm minimization amounts to choosing the forward operator as $f(\vx) = \mA \vx$ and minimizing over $\vx$ in a $\ell_1$-norm ball.

\subsection{Denoising}

We start with the perhaps most basic inverse problem, denoising. 
The motivation to study denoising is at least threefold:
First, denoising is an important problem in practice,
second, many inverse problem can be solved as a chain of denoising steps~\citep{romano_little_2017}, and third, the denoising problem is simple to model mathematically, and thus a common entry point for gaining intuition on a new method.
Given a noisy observation $\obs = \img + \noise$, where $\noise$ is additive noise, we estimate an image with the deep decoder by minimizing the least squares loss $\norm[2]{\prior(\param) - \obs}^2$, as described above. 

The results in Fig.~\ref{fig:denoising} and Table~\ref{tab:comparison} demonstrate that the deep decoder has denoising performance on-par with state of the art untrained denoising methods, such as the related Deep Image Prior (DIP) method~\citep{ulyanov_deep_2017} (discussed in more detail later) and the BM3D algorithm~\citep{dabov_image_2007}. 
Since the deep decoder is an untrained method, we only compared to other state-of-the-art untrained methods (as opposed to learned methods such as~\citep{zhang_beyond_2017}). 

Why does the deep decoder denoise well? In a nutshell, from Section~\ref{sec:comp} we know that the deep decoder can represent natural images well even when highly underparametrized.
In addition, as a consequence of being under-parameterized, the deep decoder can only represent a small proportion of the noise, as we show analytically in Section~\ref{sec:theory}, and as demonstrated experimentally in Fig.~\ref{fig:dipcurves}. 
Thus, the deep decoder ``filters out'' a significant proportion of the noise, and retains most of the signal.

How to choose the parameters of the deep decoder?
The larger $k$, the larger the number of latent parameters and thus the smaller the representation error, i.e., the error that the deep decoder makes when representing a noise-free image. 
On the other hand, the smaller $k$, the fewer parameters, and the smaller the range space of the deep decoder $\prior(\param)$, and thus the more noise the method will remove.
The optimal $k$ trades off those two errors; larger noise levels require smaller values of $k$ (or some other form of regularization). 
If the noise is significantly larger, then the method requires either choosing $k$ smaller, or it requires another means of regularization, for example early stopping of the optimization.
For example $k=64$ or $128$ performs best out of $\{32,64,128\}$, for a PSNR of around 20dB, while for a PSNR of about 14dB, $k=32$ performs best.

\begin{figure}
\begin{center}
\begin{tikzpicture}

\newcommand\xspace{2.7}
\newcommand\yspace{1}
\newcommand\ymargin{1}
\newcommand\xmargin{1}
\newcommand\ycap{-2.8cm}
\newcommand\iwidth{2.8cm}

\node at (0,-1*\yspace) {\includegraphics[width=\iwidth]{./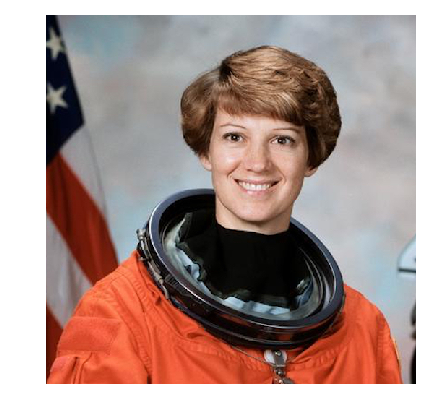}};
\node at (1*\xspace,-1*\yspace) {\includegraphics[width=\iwidth]{./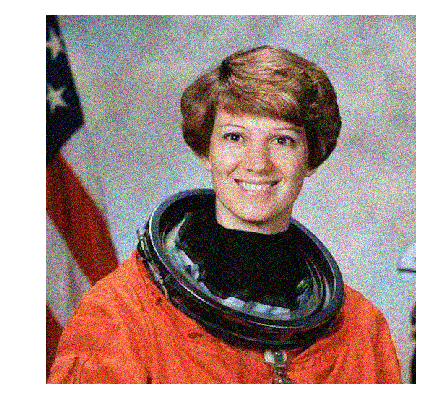}};
\node at (2*\xspace,-1*\yspace) {\includegraphics[width=\iwidth]{./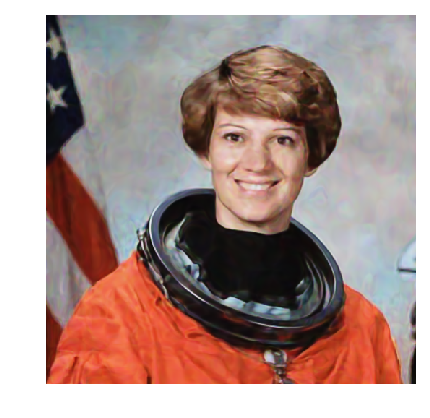}};
\node at (3*\xspace,-1*\yspace) {\includegraphics[width=\iwidth]{./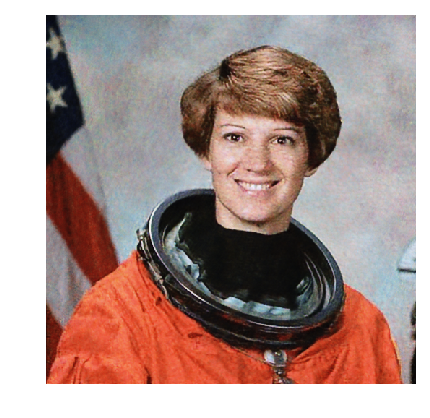}};
\node at (4*\xspace,-1*\yspace) {\includegraphics[width=\iwidth]{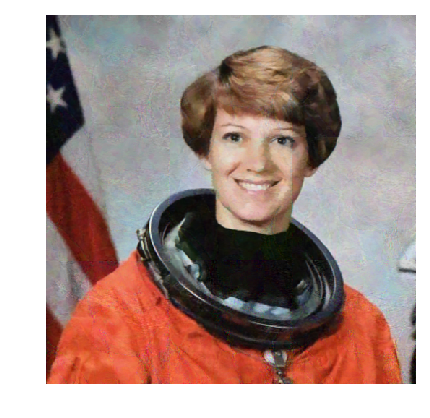}};

\node at (0,\ycap) {original image };
\node at (\xspace,\ycap) {\parbox{4cm}{\centering noisy image \\19.1dB}};
\node at (2*\xspace,\ycap) {\parbox{4cm}{\centering DD \\ 29.6dB}};
\node at (3*\xspace,\ycap) {\parbox{4cm}{\centering DIP \\ 29.2dB}};
\node at (4*\xspace,\ycap) {\parbox{4cm}{\centering BM3D \\ 28.6dB}};

\end{tikzpicture}
\end{center}

\vspace{-0.45cm}

\caption{
\label{fig:denoising}
An application of the deep decoder for denoising the astronaut test image. The deep decoder has performance on-par with state of the art untrained denoising methods, such as the DIP method~\citep{ulyanov_deep_2017} and the BM3D algorithm~\citep{dabov_image_2007}. 
}
\end{figure}


\subsection{Superresolution}

We next super-resolve images with the deep denoiser.
We define a forward model $f$ that performs downsampling with the Lanczos filter by a factor of four. 
We then downsample a given image by a factor of four, and then reconstruct it with the deep decoder (with $k=128$, as before). We compare performance to bi-cubic interpolation and to the deep image prior, and find that the deep decoder outperforms bicubic interpolation, and is on-par with the deep image prior (see Table~\ref{tab:comparison} in the appendix). 

\begin{figure}
\begin{center}
\begin{tikzpicture}

\newcommand\xspace{3}
\newcommand\yspace{3}
\newcommand\ymargin{1}
\newcommand\xmargin{1}
\newcommand\ycap{-4.8cm}
\newcommand\iwidth{3cm}

\node at (0,-1*\yspace) {\includegraphics[width=\iwidth]{./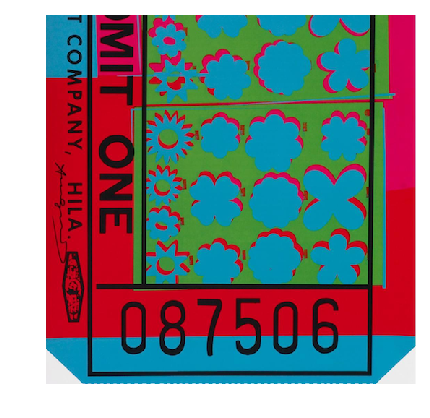}};
\node at (1*\xspace,-1*\yspace) {\includegraphics[width=\iwidth]{./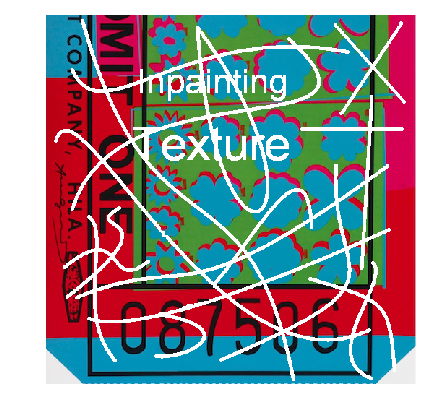}};
\node at (2*\xspace,-1*\yspace) {\includegraphics[width=\iwidth]{./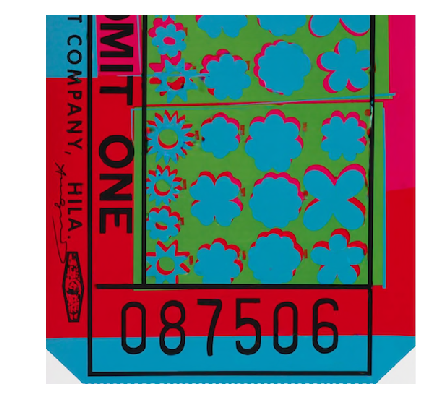}};
\node at (3*\xspace,-1*\yspace) {\includegraphics[width=\iwidth]{./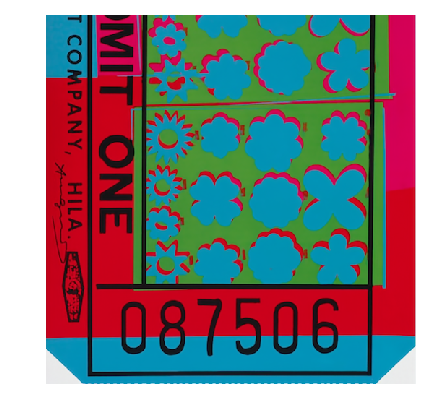}};

\node at (0,\ycap) {original image };
\node at (\xspace,\ycap) {\parbox{4cm}{\centering noisy image \\ 12.9dB}};
\node at (2*\xspace,\ycap) {\parbox{4cm}{\centering DD \\ 33.6dB}};
\node at (3*\xspace,\ycap) {\parbox{4cm}{\centering DIP \\ 35dB}};

\end{tikzpicture}
\end{center}

\vspace{-0.5cm}

\caption{
\label{fig:inpainting}
An application of the deep decoder for recovering an inpainted image. For this example, the deep decoder and the deep image perform almost equally well.
}
\end{figure}


\subsection{Inpainting}

Finally, we use the deep decoder for inpainting, where we are given an inpainted image $\vy$, and a forward model $f$ mapping a clean image to an inpainted image. The forward model $f$ is defined by a mask that describes the inpainted region, and simply maps that part of the image to zero. Fig.~\ref{fig:inpainting} and Table~\ref{tab:comparison} demonstrate that the deep decoder performs well on the inpainting problems; however, the deep image prior performs slightly better on average over the examples considered.
For the impainting problem we choose a significantly more expressive prior, specifically $k=320$.

\section{Related work \label{sec:literature}}

Image compression, restoration, and recovery algorithms are either trained or untrained. 
Conceptually, the deep decoder image model is most related to untrained methods, such as sparse representations in overcomplete dictionaries (for example wavelets~\citep{donoho_de-noising_1995} and curvelets~\citep{starck_curvelet_2002}).
A number of highly successful image restoration and recovery schemes are not directly based on generative image models, but rely on structural assumptions about the image, such as exploiting self-similarity in images for denoising~\citep{dabov_image_2007} and super-resolution~\citep{glasner_methods_2009}. 

Since the deep decoder is an image-generating deep network, it is also related to methods that rely on trained deep image models. Deep learning based methods are either trained end-to-end for tasks ranging from compression~\citep{toderici_variable_2015,agustsson_soft_2017,theis_lossy_2017,burger_image_2012,zhang_beyond_2017} to denoising \citep{burger_image_2012,zhang_beyond_2017}, or are based on learning a generative image model (by training an  autoencoder or GAN~\citep{hinton_reducing_2006,goodfellow_generative_2014}) and then using the resulting model to solve inverse problems such as compressed sensing \citep{bora_compressed_2017, hand_global_2017}, denoising~\citep{heckel_deep_2018}, phase retrieval~\citep{handleongvoroninski2019, shamshad2018robust}, and blind deconvolution~\citep{asim2018solving}, by minimizing an associated loss. 
In contrast to the deep decoder, where the optimization is over the weights of the network, in all the aforementioned methods, the weights are adjusted only during training and then are fixed upon solving the inverse problem. 

Most related to our work is the Deep Image Prior (DIP), recently proposed by Ulyanov et al.~\citep{ulyanov_deep_2017}.
The deep image prior is an untrained method that uses a network with an hourglass or encoder-decoder architecture, similar to the U-net and related architectures that work well as autoencoders.
The key differences to the deep decoder are
threefold:
i) the DIP is over-parameterized, whereas the deep decoder is under-parameterized.
ii) Since the DIP is highly over-parameterized, it critically relies on regularization through early stopping and adding noise to its input, whereas the deep decoder does not need to be regularized (however, regularization can enhance performance). 
iii) The DIP is a convolutional neural network, whereas the deep  decoder is not.

We further illustrate point ii) comparing the DIP and deep decoder by denoising the astronaut image from Fig.~\ref{fig:denoising}. 
In Fig.~\ref{fig:dipcurves}(a) we plot the Mean Squared Error (MSE) over the number of iterations of the optimizer for fitting the noisy astronaut image  $\img+\noise$.
Note that to fit the model, we minimize the error 
$\norm[2]{\prior(\mC)-(\img+\noise)}^2$, because we are only given the noisy image, but we plot the MSE between the representation and the actual, true image $\norm[2]{\prior(\mC^t)-\img}^2$ at iteration $t$. Here, $\param^t$ are the parameters of the deep decoder after $t$ iterations of the optimizer. 
In Fig.~\ref{fig:dipcurves}(b) and (c), we plot the loss or MSE associated with fitting the noiseless astronaut image, $\img$ ($\norm[2]{\prior(\mC^t)-\img}^2$) and the noise itself, $\eta$, ($\norm[2]{\prior(\mC^t)-\noise}^2$).
Models are fitted independently for the noisy image, the noiseless image, and the noise.

The plots in Fig.~\ref{fig:dipcurves} show that with sufficiently many iterations, both the DIP and the DD can fit the image well. 
However, even with a large number of iterations, the deep decoder can not fit the noise well, whereas the DIP can. 
This is not surprising, given that the DIP is over-parameterized and the deep decoder is under-parameterized. 
In fact, in Section~\ref{sec:theory} we formally show that due to the underparameterization, the deep  decoder can only fit a small proportion of the noise, no matter how and how long we optimize.
As a consequence, it filters out much of the noise when applied to a natural image. 
In contrast, the DIP relies on the empirical observation that the DIP fits a structured image faster than it fits noise, and thus critically relies on early stopping.

\begin{figure}
\begin{center}
\begin{tikzpicture}
\begin{groupplot}[
legend style={at={(1,1)}},
         title style={at={(0.5,-2.1cm)}, anchor=south}, group
         style={group size= 3 by 2, xlabels at=edge bottom, 
           horizontal sep=1.5cm, vertical sep=2.5cm}, xlabel={iteration}, 
         width=0.3\textwidth,height=0.3\textwidth, ymin=0,xmin=10]
	\nextgroupplot[title = {(a) fit noisy image},xmode=log,ymax=0.025,xlabel={iteration $t$},ylabel={MSE img}] 
	\addplot +[mark=none] table[x index=0,y index=2]{./dat/astronaut30deexp_i.dat};
	\addlegendentry{DD}
	\addplot +[mark=none] table[x index=0,y index=2]{./dat/astronaut30exp6_i.dat};
	\addlegendentry{DIP}
	\nextgroupplot[title = {(b) fit image},xmode=log,ylabel ={MSE img},xlabel={iteration $t$}]
	\addplot +[mark=none] table[x index=0,y index=1]{./dat/astronaut30deexp_c.dat};

	\addplot +[mark=none] table[x index=0,y index=1]{./dat/astronaut30exp6_c.dat};

	\nextgroupplot[title = {(c) fit noise},xmode=log,ylabel ={MSE noise},xlabel={iteration $t$}]
	\addplot +[mark=none] table[x index=0,y index=1]{./dat/astronaut30deexp_n.dat};
	\addplot +[mark=none] table[x index=0,y index=1]{./dat/astronaut30exp6_n.dat};
      
\end{groupplot}          
\end{tikzpicture}
\end{center}
\vspace{-0.5cm}
\caption{
\label{fig:dipcurves}
Denoising with the deep decoder and the deep image prior. The first two panels shows the MSE of the output of the DD or DIP for a noisy or noiseless image relative to the noiseless image.  
The third panel shows the MSE of the output of DD or DIP for an image consisting purely of noise, as computed relative to that noise. 
Due to under-parameterization, the deep decoder can only fit a small proportion of the noise, and thus enables image denoising. Early stopping can mildly enhance the performance of DD; to see this note that in panel (a), the minimum is obtained at around 5000 iterations and not at 50,000.
The deep image prior can fit noise very well, but fits an image faster than noise, thus early stopping is critical for denoising performance.  
}
\end{figure}
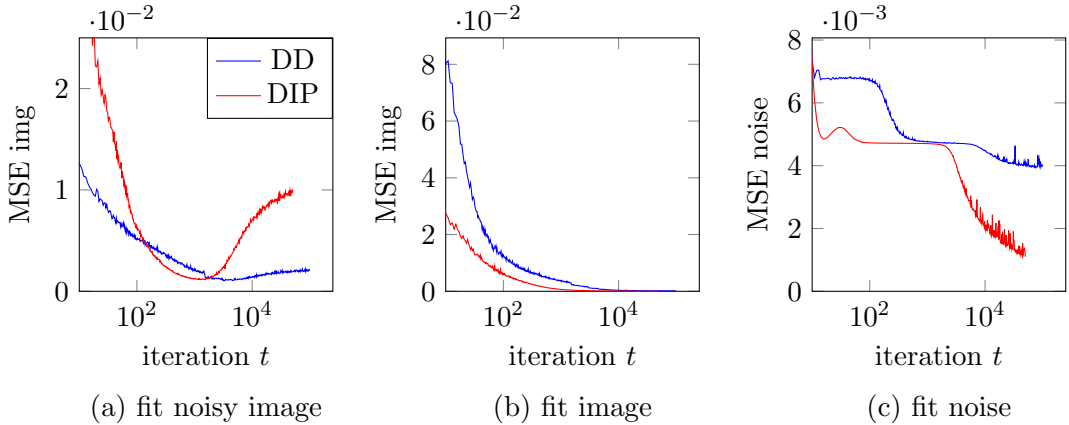

\vspace{-0.2cm}
\section{\label{sec:theory}Discussion on what makes the decoder work}
\vspace{-0.2cm}

In the previous sections we empirically showed that the deep decoder can represent images well and at the same time cannot fit noise well. 
In this section, we formally show that the deep decoder can only fit a small proportion of the noise, relative to the degree of underparameterization. 
In addition, we provide insights into how the components of the deep decoder contribute to representing natural images well, and we provide empirical observations on the sensitivity of the parameters and their distribution.

\subsection{The deep decoder can only fit little noise}

We start by showing that an under-parameterized deep decoder can only fit a proportion of the noise relative to the degree of underparameterization. 
At the heart of our argument is the intuition that a method mapping from a low- to a high-dimensional space can only fit a proportion of the noise relative to the number of free  parameters. 
For simplicity, we consider a one-layer network, and ignore the batch normalization operation.
Then, the networks output is given by 
\[
\prior(\param) 
=
{\relu(\mU_0 \mB_0 \mC_0) \vc_1} \in \reals^\nout.
\]
Here, we take {$\mC = (\mC_0, \vc_1)$}, where {$\mC_0$} is a $k \times k$ matrix and {$\vc_1$} is a $k$-dimensional vector, assuming that the number of output channels is $1$. 
While for the performance of the deep decoder the choice of upsampling matrix is important, it is not relevant for showing that the deep decoder cannot represent noise well. Therefore, the following statement makes no assumptions about the upsampling matrix {$\mU_0$}. 

\begin{proposition}
\label{prop:nonoise}
Consider a deep decoder with one layer and arbitrary upsampling and input matrices.  That is, let {$\mB_0 \in \reals^{n_0 \times k}$} and {$\mU_0 \in \reals^{n \times n_0}$}.
Let $\eta \in \reals^{n}$ be zero-mean Gaussian noise with covariance matrix $\sigma \mI$, $\sigma >0$. 
Assume that $k^2 \log( { n_0} ) / n \leq 1/32$. 
Then, with probability at least $1 - 2 {n_0}^{-k^2}$,
\[
\min_{\mC} \norm[2]{\prior(\param) - \eta}^2 
\geq 
\norm[2]{\eta}^2
\left(1 -  20 {\frac{k^2 \log({n_0}) }{n}} \right).
\]
\end{proposition}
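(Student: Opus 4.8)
\medskip
\noindent\textbf{Proof proposal.} Since $\min_{\param}\norm[2]{\prior(\param)-\eta}^2$ is precisely the squared Euclidean distance from $\eta$ to the range $\mathcal R=\{\prior(\param)\}$ of $\prior$, the plan is to: (i) show $\mathcal R$ is contained in a union of not-too-many \emph{fixed} linear subspaces of $\reals^n$, each of dimension at most $k^2$; and (ii) use that a Gaussian vector with covariance $\sigma\mI$ is, with high probability, nearly orthogonal to every member of a fixed, moderately-sized family of subspaces of dimension $\ll n$.

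For (i), write $A:=\mU_0\mB_0\in\reals^{n\times k}$, so that $\prior(\param)=\relu(A\mC_0)\vc_1=\sum_{j=1}^{k}(\vc_1)_j\,\relu\!\big(A(\mC_0)_{:,j}\big)$. For a fixed $\mC_0$, let $S\in\{0,1\}^{n\times k}$ be the sign pattern of $A\mC_0$ (with $S_{ij}=\ind{(A\mC_0)_{ij}>0}$) and let $s_j$ denote its $j$-th column. Then $\relu\!\big(A(\mC_0)_{:,j}\big)=\diag(s_j)A(\mC_0)_{:,j}$, so $\prior(\param)=\sum_{j=1}^k\diag(s_j)A\big((\vc_1)_j(\mC_0)_{:,j}\big)$ lies in the fixed subspace $W_S:=\sum_{j=1}^k\mathrm{col}\!\big(\diag(s_j)A\big)$, which, being a sum of $k$ subspaces each of dimension at most $k$, has dimension at most $k^2$. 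Letting $S$ range over the sign patterns that are actually realized by some $\mC_0$ gives $\mathcal R\subseteq\bigcup_S W_S$. It then remains to count the realized patterns: for each column the map $\vc\mapsto\ind{A\vc>0}$ takes at most polynomially many values (the classical bound on the number of cells in an arrangement of $n$ central hyperplanes in $\reals^k$), and combining the $k$ columns gives a bound of the form $(2n_0)^{k^2}$ on the number of subspaces $W_S$. This counting is the main obstacle: getting the base $n_0$ rather than $n$ requires exploiting that the hyperplanes of the arrangement are organized by the lower-dimensional inner structure coming from $\mB_0$ and from the upsampling operator $\mU_0$, and this is where the argument must be done carefully.

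For (ii), fix one of the at most $T\le(2n_0)^{k^2}$ subspaces $W$, with $\dim W=D\le k^2$ and orthogonal projection $P_W$. Since $\eta$ has covariance $\sigma\mI$, $\norm[2]{P_W\eta}^2/\sigma$ is $\chi^2_D$-distributed, and the Laurent--Massart upper-deviation bound gives $\Pr\big[\norm[2]{P_W\eta}^2\ge\sigma(D+2\sqrt{Dx}+2x)\big]\le e^{-x}$. Taking $x$ a suitable fixed multiple of $k^2\log n_0$ and union bounding over all $T$ subspaces yields, with probability at least $1-n_0^{-k^2}$, that $\max_S\norm[2]{P_{W_S}\eta}^2\le C_1 k^2\log(n_0)\,\sigma$. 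Separately $\norm[2]{\eta}^2/\sigma$ is $\chi^2_n$-distributed, so the matching lower-deviation bound, together with the hypothesis $k^2\log(n_0)\le n/32$, gives $\norm[2]{\eta}^2\ge(1-\delta)n\sigma$ for some $\delta<1/2$, again with probability at least $1-n_0^{-k^2}$.

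On the intersection of these two events (which has probability at least $1-2n_0^{-k^2}$), using $\mathcal R\subseteq\bigcup_S W_S$ and the Pythagorean identity $\mathrm{dist}(\eta,W_S)^2=\norm[2]{\eta}^2-\norm[2]{P_{W_S}\eta}^2$,
\[
\min_{\param}\norm[2]{\prior(\param)-\eta}^2 \;=\; \mathrm{dist}(\eta,\mathcal R)^2 \;\ge\; \norm[2]{\eta}^2-\max_S\norm[2]{P_{W_S}\eta}^2 \;\ge\; \norm[2]{\eta}^2-C_1 k^2\log(n_0)\,\sigma,
\]
and since $\sigma\le\norm[2]{\eta}^2/((1-\delta)n)$ on this event, the right-hand side is at least $\norm[2]{\eta}^2\big(1-\tfrac{C_1}{1-\delta}\cdot\tfrac{k^2\log n_0}{n}\big)$. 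Choosing the union-bound parameter $x$ and the deviation level $\delta$ so that $C_1/(1-\delta)\le 20$ — which the hypothesis $k^2\log(n_0)/n\le 1/32$ allows — yields the stated inequality. Beyond the sign-pattern count, all that remains is bookkeeping these constants; the probabilistic input is merely concentration of $\chi^2$ variables and a union bound.
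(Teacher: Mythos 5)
Your proposal follows essentially the same route as the paper's proof: decompose the range of the one-layer decoder into a union of at-most-$k^2$-dimensional subspaces indexed by the ReLU sign patterns, bound the number of such subspaces via a central-hyperplane-arrangement count, and finish with Laurent--Massart $\chi^2$ concentration and a union bound over the subspaces. The counting subtlety you flag (getting base $n_0$ rather than $n$) is in fact glossed over in the paper as well---its sign-pattern lemma is applied to $\mA = \mU_0\mB_0 \in \reals^{n\times k}$ and yields $n^{k}$ per column, yet the subsequent union bound is written with $n_0^{k^2}$---so your argument matches the paper's, including that loose end.
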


The proposition asserts that the deep decoder can only fit a small portion of the noise energy, precisely a proportion determined by its number of parameters relative to the output dimension, $\nout$.
Our simulations and preliminary analytic results suggest that this statement extends to multiple layers in that the lower bound becomes $\left(1 -  c{\frac{k^2 \log(\prod_{i=1}^d n_{i-1} )  }{n}} \right)$, where $c$ is a numerical constant. 
Note that the lower bound does not directly depend on the noise variance $\sigma$ since both sides of the inequality scale with $\sigma^2$.

\subsection{Upsampling}

Upsampling is a vital part of the deep decoder because it is the only way that the notion of locality explicitly enters the signal model. 
In contrast, most convolutional neural networks have spatial coupling between pixels both by unlearned upsampling, but also by learned convolutional filters of nontrivial spatial extent. The choice of the upsampling method in the deep decoder strongly affects the `character' of the resulting signal estimates.  We now discuss the impacts of a few choices of upsampling matrices $\mU_i$, and their impact on the images the model can fit.

{\bf  No upsampling:} 
If there is no upsampling, or, equivalently, if $\mU_i = \mI$, then there is no notion of locality in the resulting image.  All pixels become decoupled, and there is then no notion of which pixels are near to each other. Specifically, a permutation of the input pixels (the rows of {$\mB_0$}) simply induces the identical permutation of the output pixels. Thus, if a deep decoder without upsampling could fit a given image, it would also be able to fit random permutations of the image equally well, which is practically equivalent to fitting random noise.

{\bf  Nearest neighbor upsampling: }
If the upsampling operations perform nearest neighbor upsampling, then the output of the deep decoder consists of piecewise constant patches.  If the upsampling doubles the image dimensions at each layer, this would result in patches of $2^d\times 2^d$ pixels that are constant.  While this upsampling method does induce a notion of locality, it does so too strongly in the sense that squares of nearby pixels become identical and incapable of fitting local variation within natural images.


{\bf Linear and convex, non-linear upsampling: }
The specific choice of upsampling matrix affects the multiscale `character' of the signal estimates.  To illustrate this, Figure \ref{fig:fractals} shows the signal estimate from a 1-dimensional deep decoder with upsampling operations given by linear upsampling $(x_0, x_1, x_2, \ldots) \mapsto (x_0, 0.5 x_0 + 0.5 x_1, x_1, 0.5 x_1 + 0.5 x_2, x_2, \ldots)$ and convex nonlinear upsampling given by $(x_0, x_1, x_2, \ldots) \mapsto (x_0, 0.75 x_0 + 0.25 x_1, x_1, 0.75 x_1 + 0.25 x_2, x_2, \ldots)$.  Note that while both models are able to capture the coarse signal structure, the convex upsampling results in a multiscale fractal-like structure that impedes signal representation.  In contrast, linear upsampling is better able to represent smoothly varying portions of the signal.  Linear upsampling in a deep decoder indirectly encodes the prior that natural signals are piecewise smooth and in some sense have approximately linear behavior at multiple scales

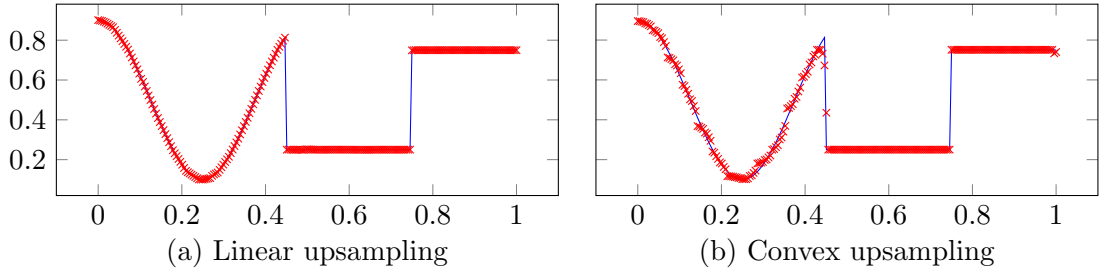
\begin{figure}
\begin{center}
\begin{tikzpicture}
\begin{groupplot}[
legend style={at={(1,1)}},
         title style={at={(0.5,-1.3cm)}, anchor=south}, group
         style={group size= 3 by 2, xlabels at=edge bottom, ylabels at=edge left,yticklabels at=edge left,
           horizontal sep=0.5cm, vertical sep=0.5cm}, xlabel={iteration}, ylabel={},
         width=0.5\textwidth,height=0.25\textwidth]
	\nextgroupplot[title = {(a) Linear upsampling}] 
	\addplot +[mark=none] table[x index=0,y index=1]{./dat/onedim_waveform_truth.dat};
	\addplot +[only marks,mark=x] table[x index=0,y index=1]{./dat/onedim_waveform_MatrixUpsample.dat};
	\nextgroupplot[title = {(b) Convex upsampling}] 
	\addplot +[mark=none] table[x index=0,y index=1]{./dat/onedim_waveform_truth.dat};
	\addplot +[only marks,mark=x] table[x index=0,y index=1]{./dat/onedim_waveform_MatrixUpsampleConvex0.75-0.25.dat};
\end{groupplot}          
\end{tikzpicture}
\end{center}
\vspace{-0.5cm}
\caption{The blue curves show a one-dimensional piecewise smooth signal, and the red crosses show estimates of this signal by a one-dimensional deep decoder with either linear or convex upsampling.  We see that linear upsampling acts as an indirect signal prior that promotes piecewise smoothness.
\label{fig:fractals}
}
\end{figure}


\subsection{Network input}

Throughout, the network input is fixed. 
We choose the network input $\mB_1$ by choosing its entries uniformly at random. 
The particular choice of the input is not very important; it is however desirable that the rows are incoherent.
To see this, as an extreme case, if any two rows of $\mB_1$ are equal and if the upsampling operation preserves the values of those pixels exactly (for example, as with the linear upsampling from the previous section), then the corresponding pixels of the output image is also exactly the same, which restricts the range space of the deep decoder unrealistically, since for any pair of pixels, the majority of natural images does not have exactly the same value at this pair of pixels.


\vspace{-0.2cm}
\subsection{Image generation by successive approximation}
\vspace{-0.2cm}

The deep decoder is tasked with coverting multiple noise channels into a structured signal primarily using  pixelwise linear combinations, ReLU activation funcions, and upsampling.  Using these tools, the deep decoder builds up an image through  a series of successive approximations that gradually morph between random noise and signal.  
To illustrate that, we plot the activation maps (i.e., $\relu(\mB_i \mC_i)$) of a deep decoder fitted to the phantom MRI test image (see Fig.~\ref{fig:viz}).
We choose a deep decoder with $d=5$ layers and $k=64$ channels.  This image reconstruction approach is in contrast to being a semantically meaningful hierarchical representation (i.e., where edges get combined into corners, that get combined into simple sample, and then into more complicated shapes), similar to what is common in discriminative networks.


\begin{figure}

\hspace{-2cm}

\begin{center}
\begin{tikzpicture}[scale=0.95]

\newcommand\xspace{3}
\newcommand\yspace{0.8}
\newcommand\ymargin{0.8}
\newcommand\xmargin{0.8}
\newcommand\ycap{-1}
\newcommand\iwidth{7cm}

\node at (-1*\xspace,-3.5*\yspace) {\includegraphics[width=7cm]{./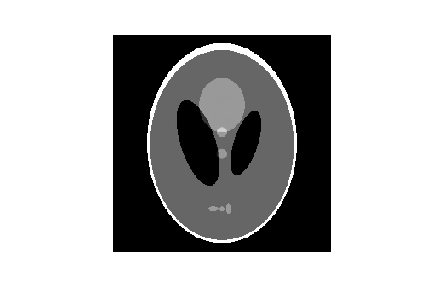}};

\node at (1*\xspace,-1*\yspace) {\includegraphics[width=\iwidth]{./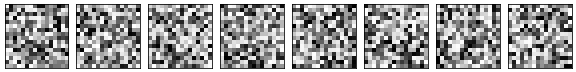}};
\node at (1*\xspace,-2*\yspace) {\includegraphics[width=\iwidth]{./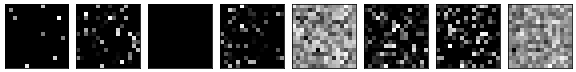}};
\node at (1*\xspace,-3*\yspace) {\includegraphics[width=\iwidth]{./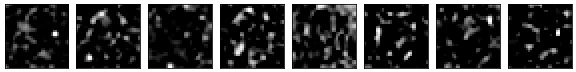}};
\node at (1*\xspace,-4*\yspace) {\includegraphics[width=\iwidth]{./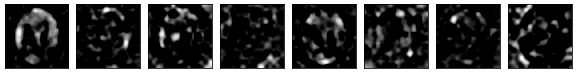}};
\node at (1*\xspace,-5*\yspace) {\includegraphics[width=\iwidth]{./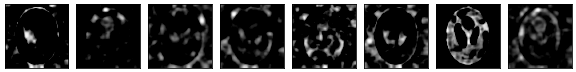}};
\node at (1*\xspace,-6*\yspace) {\includegraphics[width=\iwidth]{./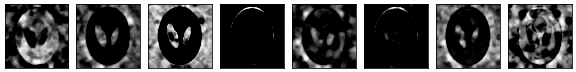}};

\end{tikzpicture}
\end{center}

\caption{
\label{fig:viz}
The left panel shows an image reconstruction after training a deep decoder on the MRI phantom image (PSNR is 51dB). The right panel shows how the deep decoder builds up an image starting from a random input. 
From top to bottom are the input to the network and the activation maps (i.e., $\relu(\mB_i \mC_i)$) for eight out of the $64$ channels in layers one to six.  
}
\end{figure}

\begin{table}
\small
\begin{tabular}{ll*{10}{c}r}
& & barbara & lovett & mri & zebra & F16 & baboon & fruit & astronaut & castle & saturn \\
\hline
DN & identity & 20.3 & 20.9 & 22.1 & 21.3 & 20.3 & 20.3 & 20.5 & 20.6 & 20.4 & 20.2\\
&DD128 & {\bf 26.8} & {\bf 27.9} & 26.9 & 22.5 & {\bf 29.1} & 21.4 & {\bf 29.2} & {\bf 29.8} & {\bf 27.7} & 29.0\\
%
%
&DIP & 24.4 & 25.3 & 26.6 & {\bf 24.8} & 25.0 & {\bf 22.8} & 25.7 & 26.1 & 25.0 & 25.0\\
&BM3D & 24.7 & 25.1 & {\bf 28.0} & 22.8 & 25.2 & 22.6 & 26.3 & 26.2 & 25.6 & {\bf 30.5}\\
\hline
\hline
SR & bicubic & 26.3 & 26.0 & 24.5 & 18.2 & 26.4 & {\bf 20.7} & 27.1 & 29.3 & 25.8 & 27.9\\
& DD128 & {\bf 26.4} & 26.3 & {\bf 26.4} & 19.0 & 26.6 & 20.6 & {\bf 28.6} & {\bf 30.2} & {\bf 26.1} & 27.8\\
& DIP & 26.4 & {\bf 26.6} & 25.6 & {\bf 19.2} & {\bf 27.4} & 20.6 & 28.3 & 29.6 & 26.0 & {\bf 27.9}\\
\hline
\hline
IP& identity & 14.9 & 14.4 & 18.3 & 13.0 & 11.7 & 14.0 & 12.4 & 14.0 & 14.2 & 13.4\\
& DD320 & 32.3 & {\bf 33.6} & 31.4 & {\bf 24.4} & {\bf 34.9} & 24.9 & {\bf 36.6} & {\bf 36.5} & 32.5 & {\bf 36.7} \\
& DIP & {\bf 35.6} & 26.9 & {\bf 32.1} &  24.2 &  34.7 & {\bf 26.2} &  35.5 & 35.3 & {\bf 32.6} & 36.2 \\
\end{tabular}

\caption{
\label{tab:comparison}
Performance comparison of the deep decoder for denoising (DN), superresolution (SR), and inpainting (IP), in peak signal to noise ratio (PSNR). 
Note that identity corresponds to the PSNR of the noise and corruption in the DN and IP experiments, respectively.
}

\end{table}


\subsubsection*{Code}
Code to reproduce the results is available at 
\url{https://github.com/reinhardh/supplement_deep_decoder}


\subsubsection*{Acknowledgments}

RH is partially supported by NSF award IIS-1816986, an NVIDIA Academic GPU Grant, and would like to thank Ludwig Schmidt
for helpful discussions on the deep decoder in general, and in particular for suggestions on the
experiments in Section~\ref{sec:comp}.


\printbibliography




\newpage

\section*{Appendix}

\appendix

\section{Proof of Proposition~\ref{prop:nonoise}}


Suppose that the network has one  layer, i.e., $\prior(\param) = {\relu(\mU_0 \mB_0 \mC_0) \vc_1}$. 
We start by re-writing {$\mB_1 = \relu(\mB_0 \mC_0)$} in a convenient form. For a given vector $\vx \in \reals^\nout$, denote by $\diag(\vx > 0)$ the matrix that contains one on its diagonal if the respective entry of $\vx$ is positive and zero otherwise. 
Let $\vc_{jci}$ denote the $i$-th column of $\mC_j$, and denote by $\mW_{ji} \in \{0,1\}^{k\times k}$ the corresponding diagonal matrix $\mW_{ji} = \diag({\mU_j} \mB_j \vc_{jci}>0)$. 
With this notation, we can write
\[
{\mB_1 }
= 
\relu({\mU_0 \mB_0 \mC_0}) 
=
{[\mW_{01}\mU_0 \mB_0 \vc_{0c1}, \ldots, \mW_{0k}\mU_0 \mB_0 \vc_{0ck}]}.
\]
Thus,
\[
\prior(\param) 
=
{[\mW_{01}\mU_0 \mB_0, \ldots, \mW_{0k} \mU_0 \mB_0]}
{
\begin{bmatrix}
\vc_{0c1} [\vc_{1}]_1 \\
\vdots \\
\vc_{0c1} [\vc_{1}]_k
\end{bmatrix},
}
\]
where {$[\vc_1]_i$} denotes the $i$-th entry of {$\vc_1$}. 
Thus, $\prior(\param)$ lies in the union of at-most-$k^2$-dimensional subspaces of $\reals^\nout$, where each subspace is determined by the matrices $\{ {\mW_{0j}} \}_{j=1}^k$. 
The number of those subspaces is bounded by $n^{k^2}$. This follows from the fact that for the matrix {$\mA := \mU_0 \mB_0$}, by Lemma~\ref{lem:signpatterns} below, the number of different matrices $\mW_{0j}$ is bounded by $n^k$. Since there are $k$ matrices, the number of different sets of matrices is bounded by $n^{k^2}$. 

\begin{lemma}
\label{lem:signpatterns}
For any $\mA \in \reals^{n\times k}$ and $k\geq 5$,
\[
| \{ \diag(\mA \vv > 0) \mA | \vv \in \reals^k \} |
\leq 
n^{k}.
\]
\end{lemma}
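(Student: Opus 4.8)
The statement I want to prove is a bound on the number of distinct "sign patterns" a linear map can exhibit: for $\mA \in \reals^{n\times k}$, the number of distinct matrices of the form $\diag(\mA \vv > 0)\mA$ as $\vv$ ranges over $\reals^k$ is at most $n^k$. The matrix $\diag(\mA\vv > 0)$ is determined by the sign pattern vector $s(\vv) \in \{0,1\}^n$ whose $i$-th entry is $\ind{\langle \va_i, \vv\rangle > 0}$, where $\va_i^T$ is the $i$-th row of $\mA$. So it suffices to bound the number of distinct such vectors $s(\vv)$, i.e., the number of cells into which the $n$ hyperplanes $\{\vv : \langle \va_i, \vv \rangle = 0\}$ partition $\reals^k$ (more precisely, the number of distinct "$>0$ vs $\le 0$" label vectors these hyperplanes realize).

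The plan is to invoke the classical hyperplane-arrangement counting bound. A set of $n$ hyperplanes through the origin in $\reals^k$ divides $\reals^k$ into at most $2\sum_{j=0}^{k-1}\binom{n-1}{j}$ regions (this is the standard central-arrangement count; a general-position arrangement of $n$ hyperplanes in $\reals^k$ has at most $\sum_{j=0}^{k}\binom{n}{j}$ regions, and passing through the origin removes a dimension's worth). Each open region gives one sign vector; points on the hyperplanes themselves only create additional patterns with some zero entries, and a short argument (e.g. perturbing $\vv$ slightly, or counting faces of the arrangement) shows these are still controlled by the same order of magnitude. So the count of distinct $s(\vv)$ is at most some explicit quantity of the form $C\sum_{j=0}^{k-1}\binom{n}{j}$.

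The remaining step is purely a calculation: bound $\sum_{j=0}^{k-1}\binom{n}{j}$ (times the small constant absorbing the boundary faces) by $n^k$ for $k \ge 5$. Using $\sum_{j=0}^{k-1}\binom{n}{j} \le k\binom{n}{k-1} \le k \cdot n^{k-1}/(k-1)!$ and noting $k/(k-1)! \le 1$ for $k\ge 3$, one gets a bound like $2\cdot n^{k-1} \le n^k$ whenever $n \ge 2$; the hypothesis $k \ge 5$ gives extra slack to absorb the constant factor coming from the boundary faces. I would present this as: reduce to counting sign vectors, cite/derive the arrangement bound, then finish with the binomial estimate.

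The main obstacle is being careful about the boundary of the arrangement — the patterns $s(\vv)$ arising when $\vv$ lies on one or more of the hyperplanes, so that $\langle \va_i, \vv\rangle = 0$ exactly and the strict inequality fails. These contribute sign vectors not captured by the open-region count. The cleanest fix is to observe that the map $\vv \mapsto \diag(\mA\vv > 0)\mA$ is constant on each face (of every dimension) of the central hyperplane arrangement, and the total number of faces of an arrangement of $n$ hyperplanes in $\reals^k$ is still $O(n^k)$ — or, even more simply, to note that any pattern with zeros is a coordinatewise limit that can be matched by a full-dimensional cell after an arbitrarily small generic rotation of $\vv$, so the distinct values of $\diag(\mA\vv>0)\mA$ are no more numerous than the open-cell count up to a harmless factor. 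Either way the arithmetic then closes with room to spare for $k \ge 5$.
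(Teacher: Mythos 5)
Your proposal is correct and follows essentially the same route as the paper: reduce to counting the sign patterns $(\mA\vv>0)$, identify these with the regions of a central arrangement of $n$ hyperplanes in $\reals^k$, invoke the classical bound $2\sum_{j=0}^{k-1}\binom{n-1}{j}$, and close with a binomial estimate using $k\ge 5$. Your extra care about sign vectors arising when $\vv$ lies on one of the hyperplanes is a legitimate subtlety that the paper's proof passes over silently, but it does not change the structure of the argument.
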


Next, fix the matrixes $\{ {\mW_{0j}} \}_{j}$.  As $\prior(\param)$ lies in an at-most-$k^2$-dimensional subspace, let $S$ be a $k^2$-dimensional subspace that contains the range of $\prior$ for these fixed $\{ {\mW_{0j}} \}_{j}$.   
It follows that 
\begin{align}
\min_{\mC} \norm[2]{\prior(\param) - \eta}^2 \geq \frac{ \norm[2]{\PSc \eta}^2 }{\norm[2]{\eta}^2}.\label{connection-min-projection}
\end{align}
Now, we make use of the following bound on the projection of the noise $\eta$ onto a subspace.
\begin{lemma}
\label{lem:noise-projection}
Let $S \subset \reals^\nout$ be a subspace with dimension $\ell$.  Let $\eta \sim \mathcal{N}(0, I_n)$ and $\beta \geq 1$.  Then,
\[
\PR{ \frac{\norm[2]{\PSc \eta}^2}{\norm[2]{ \eta}^2} \geq 1 - \frac{10 \beta \ell}{n}} \geq 1 - e^{-\beta \ell} - e^{-n/16}.
\]
\end{lemma}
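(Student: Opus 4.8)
The plan is to exploit the rotational invariance of the isotropic Gaussian and reduce the statement to two elementary chi-square tail bounds. First I would record the orthogonal decomposition $\reals^\nout = S \oplus S^\perp$ with $\dim S = \ell$ and $\dim S^\perp = \nout - \ell$, together with the Pythagorean identity $\norm[2]{\eta}^2 = \norm[2]{\PS \eta}^2 + \norm[2]{\PSc \eta}^2$. Consequently the claimed event $\norm[2]{\PSc \eta}^2 / \norm[2]{\eta}^2 \geq 1 - 10\beta\ell/\nout$ is \emph{equivalent} to $\norm[2]{\PS \eta}^2 \leq (10\beta\ell/\nout)\,\norm[2]{\eta}^2$, and for this it suffices to establish, with the stated probability, both (i) $\norm[2]{\PS \eta}^2 \leq 5\beta\ell$ and (ii) $\norm[2]{\eta}^2 \geq \nout/2$; then the ratio is at most $5\beta\ell/(\nout/2) = 10\beta\ell/\nout$.

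Next, since $\eta \sim \mathcal N(0, I_\nout)$ is rotationally invariant, $\norm[2]{\PS \eta}^2$ is distributed as a $\chi^2$ random variable with $\ell$ degrees of freedom and $\norm[2]{\eta}^2$ as a $\chi^2$ random variable with $\nout$ degrees of freedom. For (i) I would invoke the standard Laurent--Massart upper tail bound: for $X \sim \chi^2_\ell$ and $t > 0$, $\PR{X \geq \ell + 2\sqrt{\ell t} + 2t} \leq e^{-t}$. Taking $t = \beta\ell$ and using $\beta \geq 1$ (so $\sqrt{\beta} \leq \beta$), we get $\ell + 2\sqrt{\beta}\,\ell + 2\beta\ell \leq (1 + 4\beta)\ell \leq 5\beta\ell$, hence $\PR{\norm[2]{\PS \eta}^2 \geq 5\beta\ell} \leq e^{-\beta\ell}$. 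For (ii) I would invoke the matching lower tail bound: for $X \sim \chi^2_\nout$ and $t > 0$, $\PR{X \leq \nout - 2\sqrt{\nout t}} \leq e^{-t}$. Taking $t = \nout/16$ gives $\nout - 2\sqrt{\nout \cdot \nout/16} = \nout/2$, hence $\PR{\norm[2]{\eta}^2 \leq \nout/2} \leq e^{-\nout/16}$.

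Finally, a union bound over the failure events of (i) and (ii) shows that both hold simultaneously with probability at least $1 - e^{-\beta\ell} - e^{-\nout/16}$, which by the reduction in the first paragraph yields the lemma.

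\textbf{Main obstacle.} There is no substantive obstacle here; the only point requiring care is the constant bookkeeping, so that the slack provided by the hypothesis $\beta \geq 1$ lands exactly on the clean thresholds $5\beta\ell$ and $\nout/2$ used above. One should also make explicit that it is rotational invariance of the Gaussian that turns $\norm[2]{\PS \eta}^2$ and $\norm[2]{\eta}^2$ into chi-square variables of dimension $\ell$ and $\nout$, respectively; if one prefers not to quote Laurent--Massart, the same tail bounds follow from a one-line Chernoff argument on the chi-square moment generating function, at the cost of slightly messier constants.
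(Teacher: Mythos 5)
Your proposal is correct and follows essentially the same route as the paper's proof: both reduce the claim to the two Laurent--Massart chi-square tail bounds, applied to $\norm[2]{\PS \eta}^2 \sim \chi^2_\ell$ (upper tail with $t = \beta\ell$, giving the threshold $5\beta\ell$) and $\norm[2]{\eta}^2 \sim \chi^2_n$ (lower tail with $t = n/16$, giving the threshold $n/2$), finishing with a union bound. The only cosmetic difference is that you phrase the reduction via the Pythagorean identity while the paper writes $\norm[2]{\PSc\eta}^2/\norm[2]{\eta}^2 = 1 - \norm[2]{\PS\eta}^2/\norm[2]{\eta}^2$, which is the same thing.
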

\begin{proof}[Proof of Lemma \ref{lem:noise-projection}] 
From~\citet[Lem.~1]{laurent_adaptive_2000}, if $X \sim \chi^2_n$, then
\begin{align*}
\PR{ X - n \geq 2 \sqrt{n x} + 2 x } &\leq e^{-x},\\
\PR{ X \leq n - 2 \sqrt{nx} } &\leq e^{-x}.
\end{align*}
With these, we obtain
\begin{align}
\PR{X \geq 5 \beta n} &\leq e^{-\beta n} \text{ if } \beta \geq 1, \label{right-tail-bound}\\
\PR{X \leq n/2} &\leq e^{-n/16}. \label{left-tail-bound}
\end{align}
We have $\frac{\norm[2]{\PSc \eta}^2}{\norm[2]{ \eta}^2} = 1 - \frac{\norm[2]{\PS \eta}^2}{\norm[2]{ \eta}^2} $.  Note that $\norm[2] {\PS \eta} \sim \chi^2_\ell$ and $ \norm[2]{\eta}^2 \sim \chi^2_n$.  Applying inequality~\eqref{right-tail-bound} to bound $\norm[2] {\PS \eta}$ and inequality~\eqref{left-tail-bound} to bound $\norm[2]{\eta}^2$, a union bound gives that claim.
\end{proof}

Thus, by inequality~\eqref{connection-min-projection} and Lemma \ref{lem:noise-projection} with $\ell =k^2$, for all $\beta \geq 1$,
\begin{align}
\label{eq:conclbpart1}
\PR{
\frac{1}{\norm[2]{\eta}^2} \min_{\mC} \norm[2]{\prior(\param) - \eta}^2
 \geq 1 - \frac{10 \beta k^2}{n}
 \Bigg|
\{ {\mW_{0j}} \}_{j}
} 
\geq 1 - e^{- k^2 \beta} - e^{-n/16}.
\end{align}
Since the number of matrices $\{ {\mW_{0j}} \}_{j}$ is bounded by ${n_0}^{k^2}$, by a union bound, 
\begin{align}
\label{eq:conclbpart1}
\PR{
\frac{1}{\norm[2]{\eta}^2} \min_{\mC} \norm[2]{\prior(\param) - \eta}^2
 \leq 
 1 - \frac{10 \beta k^2}{n}  
} 
\leq {n_0}^{k^2} (e^{- \beta k^2} + e^{-n/16})
\leq 2 {n_0}^{-k^2},
\end{align}
where the last inequality follows with choosing $\beta = 2 \log({n_0})$ and by the assumption that $k^2 < \frac{n}{32 \log {n_0}}$.
This proves the claim in Proposition~\ref{prop:nonoise}.

\subsection{Proof of Lemma~\ref{lem:signpatterns}}

Our goal is to count the number of sign patterns $(\mA \vv>0) \in \{0,1\}$. Note that this number is equal to the maximum number of partitions one can get when cutting a $k$-dimensional space with $\nout$ many hyperplanes that all pass through the origin, and are perpendicular to the rows of $\mA$.
This number if well known (see for example~\citet{winder_partitions_1966}) and is upper bounded by
\[
2 \sum_{i=0}^{n-1} {n-1 \choose k}.
\]
Thus, 
\[
|\{\diag(\mA \vv > 0) \mA \colon \vv \in \reals^k \}|
\leq
2 \sum_{i=0}^{n-1} {n-1 \choose k}
\leq
2 k \left( \frac{e (n-1)}{ k } \right)^k
\leq
n^k,
\]
where the last inequality holds for $k \geq 5$.

\section{Sensitivity to parameter perturbations and distribution of parameters}

The deep decoder is not overly sensitive to perturbations of its coefficients. 
To demonstrate this, fit the standard test image Barbara with a deep decoder with $6$ layers and $k=128$, as before.
We then perturb the weights in a given layer $i$ (i.e., the matrix $\mC_i$) with Gaussian noise of a certain signal-to-noise ratio relative to $\mC_i$ and leave the other weights and the input untouched. 
We then measure the peak signal-to-noise ratio in the image domain, and plot the corresponding curve for each layer (see Fig.~\ref{fig:sensitivity}). 
It can be seen that the representation provided by the deep decoder is relatively stable with respect to perturbations of its coefficients, and that it is more sensitive to perturbations in higher levels.

Finally, in Fig.~\ref{fig:dist_weights} we depict the distribution of the weights of the network after fitted to the Barbara test image, and note that the weights are approximately Gaussian distributed.

\begin{figure}
\begin{center}
\begin{tikzpicture}
\begin{groupplot}[
legend style={at={(0.29,1)}},
         title style={at={(1.0cm,-1.8cm)}, anchor=south},
           xlabel={weight noise SNR}, ylabel={Image PSNR},
         width=0.35\textwidth,height=0.35\textwidth]
         
        \nextgroupplot[]
	\addplot +[mark = x] table [x index = 0, y index=1] {./dat/exp_sensitivity100.0.dat};
	\addlegendentry{\tiny $1$}
	\addplot +[mark = x] table [x index = 0, y index=3] {./dat/exp_sensitivity100.0.dat};
	\addlegendentry{\tiny $2$}
	\addplot +[mark = x] table [x index = 0, y index=5] {./dat/exp_sensitivity100.0.dat};
	\addlegendentry{\tiny $3$}
	\addplot +[mark = x] table [x index = 0, y index=7] {./dat/exp_sensitivity100.0.dat};
	\addlegendentry{\tiny $4$}
	\addplot +[mark = x] table [x index = 0, y index=9] {./dat/exp_sensitivity100.0.dat};
	\addlegendentry{\tiny $5$}
	\addplot +[mark = x] table [x index = 0, y index=11] {./dat/exp_sensitivity100.0.dat};
	\addlegendentry{\tiny $6$}
	\addplot +[mark = x] table [x index = 0, y index=13] {./dat/exp_sensitivity100.0.dat};
	\addlegendentry{\tiny $7$}
	
\end{groupplot}

\begin{scope}[xshift=5.6cm,yshift =5.5cm]
\newcommand\xspace{2}
\newcommand\yspace{2.5}
\newcommand\ymargin{1.1}
\newcommand\xmargin{0.8}
\newcommand\ycap{-3.7cm}
\newcommand\iwidth{2.3cm}

\node at (0*\xspace,-1*\yspace) {\includegraphics[width=\iwidth]{./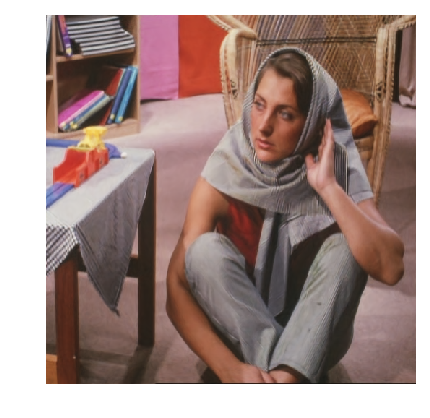}};
\node at (1*\xspace,-1*\yspace) {\includegraphics[width=\iwidth]{./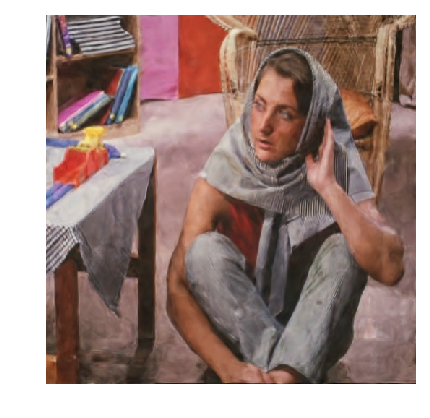}};
\node at (2*\xspace,-1*\yspace) {\includegraphics[width=\iwidth]{./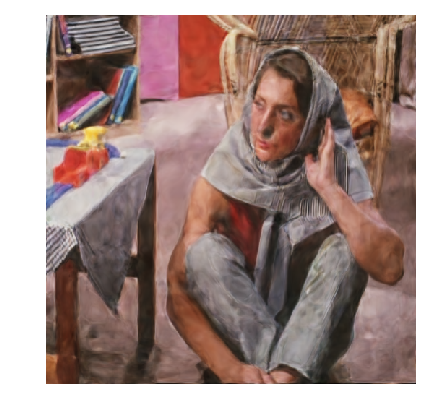}};
\node at (3*\xspace,-1*\yspace) {\includegraphics[width=\iwidth]{./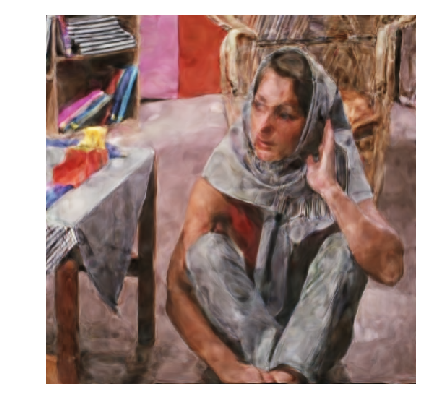}};

\node at (0*\xspace,-2*\yspace) {\includegraphics[width=\iwidth]{./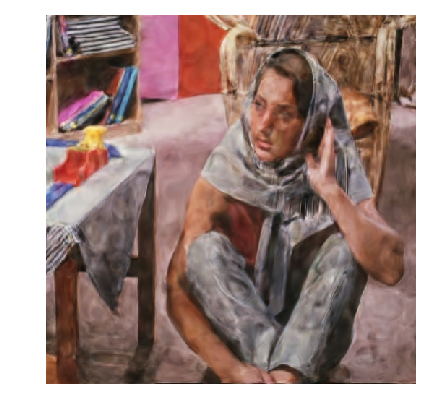}};
\node at (1*\xspace,-2*\yspace) {\includegraphics[width=\iwidth]{./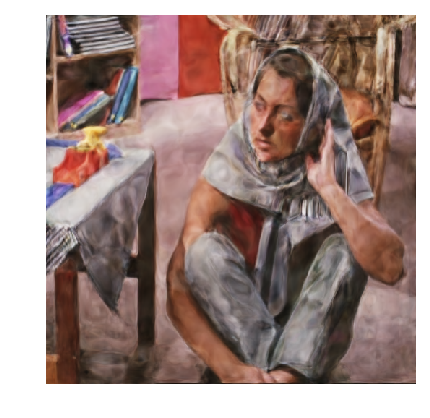}};
\node at (2*\xspace,-2*\yspace) {\includegraphics[width=\iwidth]{./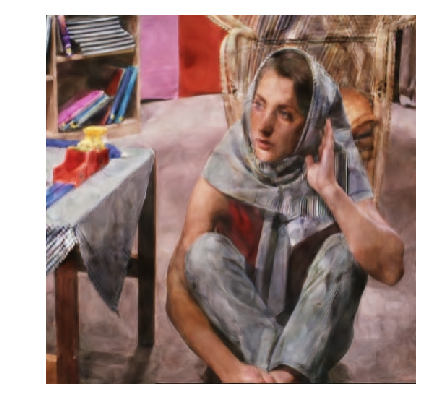}};
\node at (3*\xspace,-2*\yspace) {\includegraphics[width=\iwidth]{./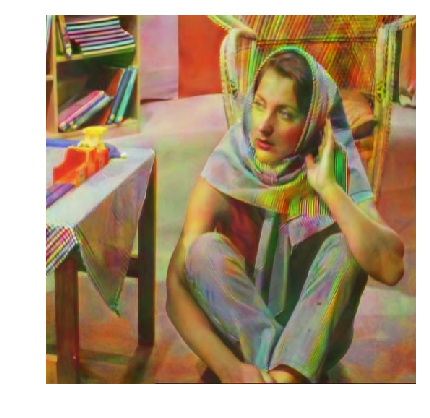}};

\node at (0*\xspace,\ycap) {\parbox{4cm}{\centering noiseless}};
\node at (1*\xspace,\ycap) {\parbox{4cm}{\centering $\mC_1$ noisy}};
\node at (2*\xspace,\ycap) {\parbox{4cm}{\centering $\mC_2$ noisy}};
\node at (3*\xspace,\ycap) {\parbox{4cm}{\centering $\mC_3$ noisy}};

\node at (0*\xspace,\ycap-\yspace cm) {\parbox{4cm}{\centering $\mC_4$ noisy}};
\node at (1*\xspace,\ycap-\yspace cm) {\parbox{4cm}{\centering $\mC_5$ noisy}};
\node at (2*\xspace,\ycap-\yspace cm) {\parbox{4cm}{\centering $\mC_6$ noisy}};
\node at (3*\xspace,\ycap-\yspace cm) {\parbox{4cm}{\centering $\vc_7$ noisy}};
\end{scope}

\end{tikzpicture}
\end{center}
\vspace{-0.5cm}
\caption{
\label{fig:sensitivity}
Sensitivity to parameter perturbations of the weights in each layer, and images generated by perturbing the weights in different layers, and keeping the weights in the other layers constant.
}
\end{figure}


\definecolor{DarkBlue}{rgb}{0,0,0.7} 
\definecolor{BrickRed}{RGB}{203,65,84}

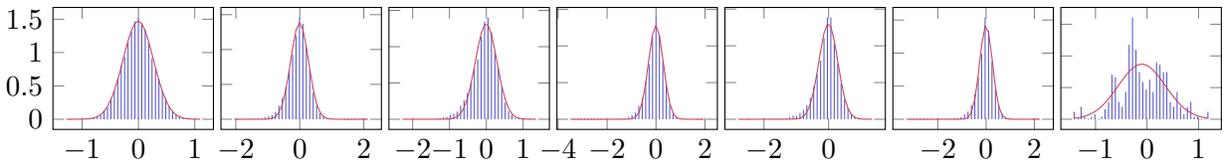
\begin{figure}[h!]
\begin{center}
\begin{tikzpicture}

\begin{groupplot}[
title style={at={(0.5,-0.5)},anchor=north}, 
         group
         style={group size=7 by 1, 
         ylabels at=edge left, yticklabels at=edge left,    
         horizontal sep=0.1cm,vertical sep=2.3cm}, 
         width=0.225\textwidth,
]

\nextgroupplot[] 
\addplot +[ycomb,DarkBlue!65,mark=none] table[x index=0,y index=1]{./dat/exp_wdist.dat};
\addplot +[BrickRed,mark=none] table[x index=0,y index=2]{./dat/exp_wdist.dat};

\nextgroupplot[] 
\addplot +[ycomb,DarkBlue!65,mark=none] table[x index=3,y index=4]{./dat/exp_wdist.dat};
\addplot +[BrickRed,mark=none] table[x index=3,y index=5]{./dat/exp_wdist.dat};

\nextgroupplot[] 
\addplot +[ycomb,DarkBlue!65,mark=none] table[x index=6,y index=7]{./dat/exp_wdist.dat};
\addplot +[BrickRed,mark=none] table[x index=6,y index=8]{./dat/exp_wdist.dat};

\nextgroupplot[] 
\addplot +[ycomb,DarkBlue!65,mark=none] table[x index=9,y index=10]{./dat/exp_wdist.dat};
\addplot +[BrickRed,mark=none] table[x index=9,y index=11]{./dat/exp_wdist.dat};

\nextgroupplot[] 
\addplot +[ycomb,DarkBlue!65,mark=none] table[x index=12,y index=13]{./dat/exp_wdist.dat};
\addplot +[BrickRed,mark=none] table[x index=12,y index=14]{./dat/exp_wdist.dat};

\nextgroupplot[] 
\addplot +[ycomb,DarkBlue!65,mark=none] table[x index=15,y index=16]{./dat/exp_wdist.dat};
\addplot +[BrickRed,mark=none] table[x index=15,y index=17]{./dat/exp_wdist.dat};

\nextgroupplot[] 
\addplot +[ycomb,DarkBlue!65,mark=none] table[x index=18,y index=19]{./dat/exp_wdist.dat};
\addplot +[BrickRed,mark=none] table[x index=18,y index=20]{./dat/exp_wdist.dat};

\end{groupplot}          
\end{tikzpicture}
\end{center}

\caption{
\label{fig:dist_weights}
Distribution of the weights for fitting the test image Barbara along with a Gaussian fit: The distribution of the weighs is approximately Gaussian.
}
\end{figure}

\end{document}